\newtheorem{definition}{Definition}
\newtheorem{theorem}{Theorem}
\newcommand{\der}[0]{\textbf{y}}
\newcommand{\ignore}[1]{}
\title{Minimum Error Rate Training and the Convex Hull Semiring\thanks{${}$ While preparing these notes, I discovered the work by Sokolov and Yvon (2011), who elucidated the semiring properties of the \textsc{mert} line search computation. Because they did not discuss the polynomial bounds on the growth of the values while running the inside algorithm, I have posted this as an unpublished manuscript.}}
\author{Chris Dyer\\
  School of Computer Science \\
  Carnegie Mellon University \\
  5000 Forbes Ave. \\
  Pittsburgh, PA 15213, USA \\
  {\tt cdyer@cs.cmu.edu}}
\date{}
\begin{document}
\maketitle
\begin{abstract}
We describe the line search used in the minimum error rate training algorithm \cite{mert} as the ``inside score'' of a weighted proof forest under a semiring defined in terms of well-understood operations from computational geometry. This conception leads to a straightforward complexity analysis of the dynamic programming \textsc{mert} algorithms of \newcite{macherey:2008} and \newcite{kumar:2009} and practical approaches to implementation.
\end{abstract}

\section{Introduction}

Och's (2003) algorithm for minimum error rate training (\textsc{mert}) is widely used in the direct loss minimization of linear translation models.  It is based on an efficient and optimal line search and can optimize non-differentiable, corpus-level loss functions. While the original algorithm used $n$-best hypothesis lists to learn from, more recent work has developed dynamic programming variants that leverage much larger sets of hypotheses encoded in finite-state lattices and context-free hypergraphs \cite{macherey:2008,kumar:2009,sokolov:2011}. Although \textsc{mert} has several attractive properties (\S\ref{sec:mert}) and is widely used in MT,  previous work has failed to explicate its close relationship to more familiar inference algorithms, and, as a result, it is less well understood than many other optimization algorithms.

In this paper, we show that the both the original \cite{mert} and newer dynamic programming algorithms given by \newcite{macherey:2008} and \newcite{kumar:2009} can be understood as  weighted logical deductions \cite{goodman:1999,lopez:2009,dyna} using weights from a previously undescribed semiring, which we call the \textbf{convex hull semiring} (\S\ref{sec:semirings}). Our description of the algorithm in terms of semiring computations has both theoretical and practical benefits: we are able to provide a straightforward complexity analysis and an improved DP algorithm with better asymptotic and observed run-time (\S\ref{sec:complexity}). More practically still, since many tools for structured prediction over discrete sequences support generic semiring-weighted inference \cite{openfst,joshua,cdec,dyna}, our analysis makes it is possible to add dynamic programming \textsc{mert} to them with little effort.

\section{Minimum error rate training}
\label{sec:mert} The goal of \textsc{mert} is to find a weight vector $\textbf{w}^* \in \mathbb{R}^d$ that minimizes a corpus-level loss $\mathcal{L}$ (with respect to a development set $\mathcal{D}$) incurred by a decoder that selects the most highly-weighted output of a linear structured prediction model parameterized by feature vector function $\textbf{H}$:
\begin{align*}
\textbf{w}^* &= \arg \min_{\textbf{w}}\mathcal{L}(\{\hat{\textbf{y}}_i^{\textbf{w}}\}, \mathcal{D}) \\
\{\hat{\textbf{y}}_i^{\textbf{w}}\} & = \arg \max_{\textbf{y}\in \mathcal{Y}(x_i)} \textbf{w}^{\top} \textbf{H}(x_i,\textbf{y})\quad \forall (x_i,\textbf{y}_i^{\textrm{gold}}) \in \mathcal{D} \\
\end{align*}
We assume that the loss $\mathcal{L}$ is computed using a vector error count function $\delta(\hat{y},y) \rightarrow \mathbb{R}^m$ and a loss scalarizer $L : \mathbb{R}^m \rightarrow \mathbb{R}$, and that the error count decomposes linearly across examples:\footnote{Nearly every evaluation metric used in NLP and MT fulfills these criteria, including F-measure, \textsc{bleu}, \textsc{meteor}, \textsc{ter}, \textsc{aer}, and \textsc{wer}. Unlike many dynamic programming optimization algorithms, the error count function  $\delta$ is not required to decompose with the structure of the model.} 
\begin{align*}
\mathcal{L}(\{\hat{\textbf{y}}_i^{\textbf{w}}\},\mathcal{D}) &= L\left(\sum_{i=1}^{|\mathcal{D}|}\delta(\hat{\textbf{y}}_i,\textbf{y}_i^{\textrm{gold}})\right) \\
\end{align*}
At each iteration of the optimization algorithm, \textsc{mert} choses a starting weight vector $\textbf{w}_0$ and a search direction vector $\textbf{v}$ (both $\in \mathbb{R}^d$) and determines which candidate in a set has the highest model score for \emph{all} weight vectors $\textbf{w}' = \eta\textbf{v} + \textbf{w}_0$, as $\eta$ sweeps from $-\infty$ to $+\infty$.\footnote{Several strategies have been proposed for selecting \textbf{v} and $\textbf{w}_0$. For an overview, refer to \newcite{galley:2011} and references therein.}

To understand why this is potentially tractable, consider any (finite) set of outputs $\{ \textbf{y}_j \} \subseteq \mathcal{Y}(x)$ for an input $x$ (e.g., an $n$-best list, a list of $n$ random samples, or the complete proof forest of a weighted deduction). Each output $\textbf{y}_j$ has a corresponding feature vector $\textbf{H}(x,\textbf{y}_j)$, which means that the \emph{model score} for each hypothesis, together with $\eta$, form a line in $\mathbb{R}^2$:
\begin{align*}
s(\eta)&=(\eta \textbf{v} + \textbf{w}_0)^{\top}\textbf{H}(x,\textbf{y}_j)\\
&= \eta \underbrace{\textbf{v}^{\top}\textbf{H}(x,\textbf{y}_j)}_{\textrm{slope}} + \underbrace{\textbf{w}_0^{\top}\textbf{H}(x,\textbf{y}_j)}_{y\textrm{-intercept}} \ \ \ . 
\end{align*}
The upper part of Figure~\ref{fig:setofderiv} illustrates how the model scores ($y$-axis) of each output in an example hypothesis set vary with $\eta$ ($x$-axis). The lower part shows how this induces a piecewise constant error surface (i.e., $\delta(\hat{\textbf{y}}^{\eta \textbf{v} + \textbf{w}_0},\textbf{y}^{\textrm{gold}})$). Note that  $\textbf{y}_3$ has a model score that is always strictly less than the score of some other output at all values of $\eta$. Detecting such ``obscured'' lines is useful because it is unnecessary to compute their error counts. There is simply no setting of $\eta$ that will yield weights for which $\textbf{y}_3$ will be ranked highest by the decoder.\footnote{Since $\delta$ need only be evaluated for the (often small) subset of candidates that can obtain the highest model score at some $\eta$, it is possible to use relatively computationally expensive loss functions. \newcite{zaidan:2009} exploit this and find that it is even feasible to solicit human judgments while evaluating $\delta$!}
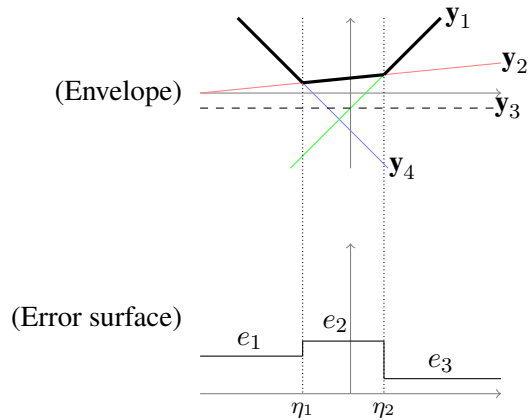
\begin{figure}[h]
\begin{center}
\begin{tikzpicture}
	\tikzstyle{axes} = [->,gray,thin]
	\tikzstyle{boundary} = [thin,densely dotted]
        
	\node [anchor=east,left=3pt] at (0,1) {(Envelope)};
	\draw[axes] (2,0) -- +(0,2); 
	\draw[axes] (0,1) -- +(4,0); 
	\draw (0.5,2) node (a 1){} -- (2.5,0) node (b 1){$\quad \der_4$}[blue!50!white];
	\draw (0,1.0) node (a 2){} -- (4,1.4) node (b 2){$\ \ \ \ \der_2$}[red!50!white];
	\draw (1.2,0) node (a 3){} -- (3.2,2) node (b 3){$\quad \ \der_1$}[green!80!white];
			\draw (0,1.2-0.4) node (a 4){} -- (4,1.2-0.4) node (b 4){$\ \ \der_3$}[dashed,black];


	\path (intersection of a 1--b 1 and a 2--b 2) coordinate (bound 1);
		\draw (0.5,2){} -- (bound 1){}[very thick,black];
	\draw[boundary] (bound 1 |- 0,2) -- (bound 1 |- 0,-3);

	\path (intersection of a 2--b 2 and a 3--b 3) coordinate (bound 2);
		\draw (bound 1){} -- (bound 2){}[very thick,black];
		\draw (bound 2){} -- (3.2,2){}[very thick,black];
	\draw[boundary] (bound 2 |- 0,2) -- (bound 2 |- 0,-3);
	

	\node [anchor=east,left=3pt] at (0,-2) {(Error surface)};
	\draw[axes] (2,-3) -- +(0,2); 
	\draw[axes] (0,-3) -- +(4,0); 
	\draw (0,-2.5) -- (bound 1 |- 0,-2.5); 
	\draw  (bound 1 |- 0,-2.5) --  (bound 1 |- 0,-2.3); 
	\draw (bound 1 |- 0,-2.3) -- (bound 2 |- 0,-2.3); 
	\draw (bound 2 |- 0,-2.3) -- (bound 2 |- 0,-2.8); 
	\draw (bound 2 |- 0,-2.8) -- (4,-2.8); 
                \node [anchor=south] at (bound 1 |- 0,-3.45) {\footnotesize{$\eta_1$}};
                \node [anchor=south] at (bound 2 |- 0,-3.45) {\footnotesize{$\eta_2$}};
                
          \node [anchor=north] at (0.66,-2.05) {{$e_1$}};
 \node [anchor=north] at (1.8,-1.82) {{$e_2$}};
\node [anchor=north] at (3.2,-2.35) {{$e_3$}};

\end{tikzpicture}
\end{center}
\vspace{-.5cm}
\caption{The model scores of a set of four output hypotheses $\{\der_1,\der_2,\der_3,\der_4\}$ under a linear model with parameters $\textbf{w} = \eta \textbf{v} + \textbf{w}_0$, inducing segments $(-\infty,\eta_1],[\eta_1,\eta_2],[\eta_2,\infty)$, which correspond (below) to error counts $e_1, e_2, e_3$.}
\label{fig:setofderiv}
\end{figure}

By summing the error surfaces for each sentence in the development set, a \emph{corpus-level} error surface is created. Then, by traversing this from left to right and selecting best scoring segment (transforming each segment's corpus level error count to a loss with $L$), the optimal  $\eta$ for updating $\textbf{w}_0$ can be determined.\footnote{\newcite{macherey:2008} recommend selecting the midpoint of the segment with the best loss, but \newcite{cer:2008} suggest other strategies.}

\subsection{Point-line duality}
The set of line segments corresponding to the maximum model score at every $\eta$ form an \emph{upper envelope}. To determine which lines (and corresponding hypotheses) these are, we turn to standard algorithms from computational geometry.  While algorithms for directly computing the upper envelop of a set of lines do exist, we proceed by noting that computing the upper envelope has as a dual problem that can be solved instead: finding the lower \emph{convex hull} of a set of points \cite{fourms}. The dual representation of a line of the form $y=mx+b$ is the point $(m,-b)$. This,  for a given output, $\textbf{w}_0$, $\textbf{v}$, and feature vector $\textbf{H}$, the line showing how the model score of the output hypothesis varies with $\eta$ can simply be represented by the \emph{point} $({\textbf{v}^{\top}\textbf{H},-\textbf{w}_0^{\top}\textbf{H}})$.

Figure~\ref{fig:duality} illustrates the line-point duality and the relationship between the primal upper envelope and dual lower convex hull. Usefully, the $\eta$ coordinates (along the $x$-axis in the primal form) where upper-envelope lines intersect and the error count changes are simply the \emph{slopes} of the lines connecting the corresponding points in the dual.

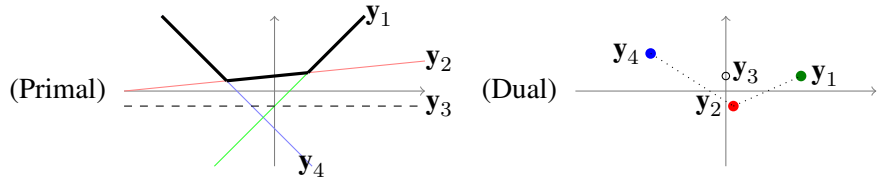
\begin{figure*}
\begin{center}
\begin{tikzpicture}
	\tikzstyle{axes} = [->,gray,thin]
	\tikzstyle{boundary} = [thin,densely dotted]

	\node [anchor=east,left=3pt] at (0,1) {(Primal)};
	\draw[axes] (2,0) -- +(0,2); 
	\draw[axes] (0,1) -- +(4,0); 
	\node [anchor=east,left=1pt] at (2.5,2) {$_{}$}; 
	\node [anchor=east,left=1pt] at (4.2,0.8) {$$}; 
	\draw (0.5,2) node (a 1){} -- (2.5,0) node (b 1){$\textbf{y}_4$}[blue!50!white];
	\draw (0,1.0) node (a 2){} -- (4,1.4) node (b 2){$\quad \textbf{y}_2$}[red!50!white];
	\draw (1.2,0) node (a 3){} -- (3.2,2) node (b 3){$\quad \textbf{y}_1$}[green!80!white];
		\draw (0,1.2-0.4) node (a 4){} -- (4,1.2-0.4) node (b 4){$\quad\textbf{y}_3$}[dashed,black];


	\path (intersection of a 1--b 1 and a 2--b 2) coordinate (bound 1);
		\draw (0.5,2){} -- (bound 1){}[very thick,black];

	\path (intersection of a 2--b 2 and a 3--b 3) coordinate (bound 2);
		\draw (bound 1){} -- (bound 2){}[very thick,black];
		\draw (bound 2){} -- (3.2,2){}[very thick,black];
	
	
	\node [anchor=east,left=3pt] at (6,1) {(Dual)};
	\draw[axes] (8,0) -- +(0,2);
	\draw[axes] (6,1) -- +(4,0.0);
	\node [anchor=east,left=1pt] at (8.5,2) {$$}; 
	\node [anchor=east,left=1pt] at (10.2,0.8) {$$}; 
	\fill[blue]  (8+-1,0.5+1)circle (2pt);
	\node [anchor=east,left=3pt] at (7.1,1.5) {$\textbf{y}_4$};
	\fill[red]  (8+0.1,-0.2+1)circle (2pt);
	\node [anchor=east,left=3pt] at (8.2,0.8) {$\textbf{y}_2$};
	\draw (8+-1,0.5+1){} -- (8+0.1,-0.2+1){}[dotted,black];
	\fill[green!50!black]  (8+1.0,0.2+1)circle (2pt);
	\node [anchor=west,right=3pt] at (8.9,1.2) {$\textbf{y}_1$};
	\draw (8+0.1,-0.2+1){} -- (8+1.0,0.2+1){}[dotted,black];

	\draw[black]  (8,0.2+1)circle (1.4pt);

		\node [anchor=west,right=3pt] at (7.85,1.25) {$\textbf{y}_3$};

\end{tikzpicture}
\end{center}
\vspace{-.5cm}
\caption{Primal and dual forms of a set of lines. The upper envelope is shown with heavy line segments in the primal form. In the dual, primal lines are represented as points, with upper envelope lines corresponding to points on the lower convex hull. The dashed line $\textbf{y}_3$ is obscured from above by the upper envelope in the primal and (equivalently) lies above the lower convex hull of the dual point set.}
\label{fig:duality}
\end{figure*}

\section{The Convex Hull Semiring}
\label{sec:semirings}
\begin{definition}
A \textbf{semiring} $K$ is a quintuple $\langle \mathbb{K}, \oplus, \otimes, \overline{0}, \overline{1} \rangle$  consisting of a set $\mathbb{K}$, an addition operator $\oplus$ that is associative and commutative, a multiplication operator $\otimes$ that is associative, and the values $\overline{0}$ and $\overline{1}$ in $\mathbb{K}$, which are the additive and multiplicative identities, respectively.  $\otimes$ must distribute over $\oplus$ from the left or right (or both), i.e., $a \otimes (b \oplus c) = (a \otimes b) \oplus (a \otimes c)$ or $(b \oplus c) \otimes a = (b \otimes a) \oplus (c \otimes a)$.  Additionally, $\overline{0} \otimes u = \overline{0}$ must hold for any $u\in\mathbb{K}$.  If a semiring $K$ has a commutative $\otimes$ operator, the semiring is said to be \emph{commutative}.  If $K$ has an idempotent $\oplus$ operator (i.e., $a \oplus a = a$ for all  $a \in \mathbb{K}$), then K is said to be \emph{idempotent}.
\end{definition}

\begin{definition} \textbf{The Convex Hull Semiring}. Let $(\mathbb{K},\oplus,\otimes,\overline{0},\overline{1})$ be defined as follows:
\begin{tabular}{c|l}
\hline
$\mathbb{K}$ & A set of points in the plane that are\\
& $\quad$the extreme points of a convex hull.\\
$A \oplus B$ & $\textrm{\emph{conv}}\left[A \cup B \right]$ \\
$A \otimes B$ & convex hull of the Minkowski sum, i.e., \\
&$\quad \textrm{\emph{conv}} \{  (a_1 + b_1, a_2 + b_2) \mid$ \\
& $\quad\quad(a_1,a_2) \in A \ \wedge\  (b_1,b_2) \in B \}$\\
$\overline{0}$ & $\emptyset$ \\
$\overline{1}$ & $\{ ( 0,0 ) \}$ \\
\end{tabular}
\end{definition}

\begin{theorem}
The Convex Hull Semiring fulfills the semiring axioms and is commutative and idempotent.
\end{theorem}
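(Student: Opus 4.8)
My plan is to transport every semiring axiom into the setting of convex subsets of the plane, where each required identity is either a standard closure property or follows from a one-line convex-combination argument. The first step is to fix the representation: identify each nonempty $A\in\mathbb{K}$ with the compact convex polygon $\mathrm{conv}(A)$ it spans, and $\overline 0=\emptyset$ with the empty set, observing that $A\mapsto\mathrm{conv}(A)$ is a bijection between $\mathbb{K}$ and the family of compact convex polygons, with inverse the extreme-point (vertex) map. Writing $+$ for the Minkowski sum of point sets, the two facts I will invoke throughout are: (i) $\mathrm{conv}\big(\mathrm{conv}(X)\cup\mathrm{conv}(Y)\big)=\mathrm{conv}(X\cup Y)$; and (ii) the Minkowski sum is associative and commutative, $\mathrm{conv}(X)+\mathrm{conv}(Y)=\mathrm{conv}(X+Y)$, a Minkowski sum of convex sets is convex, $\{(0,0)\}$ is a Minkowski identity, and $\emptyset+X=\emptyset$. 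Under this identification $A\oplus B$ has underlying convex set $\mathrm{conv}(\mathrm{conv}(A)\cup\mathrm{conv}(B))$, and by (ii) $A\otimes B$ has underlying set $\mathrm{conv}(A+B)=\mathrm{conv}(A)+\mathrm{conv}(B)$, the outer convex hull in the definition of $\otimes$ being vacuous once both arguments are convex.

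The easy axioms then fall out. Commutativity of $\oplus$ is commutativity of $\cup$; associativity follows from (i), since $(A\oplus B)\oplus C$ and $A\oplus(B\oplus C)$ both have underlying set $\mathrm{conv}(A\cup B\cup C)$; idempotency holds because $\mathrm{conv}(A\cup A)=\mathrm{conv}(A)$, whose vertex set is $A$; and $A\oplus\overline 0$ has underlying set $\mathrm{conv}(A\cup\emptyset)=\mathrm{conv}(A)$, i.e.\ $A$ again. On the multiplicative side, associativity and commutativity of $\otimes$ are exactly those of the Minkowski sum by (ii); $\overline 1\otimes A$ has underlying set $\{(0,0)\}+\mathrm{conv}(A)=\mathrm{conv}(A)$, i.e.\ $A$; and $\overline 0\otimes u$ has underlying set $\emptyset+\mathrm{conv}(u)=\emptyset=\overline 0$.

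The one axiom that needs genuine work is distributivity, $A\otimes(B\oplus C)=(A\otimes B)\oplus(A\otimes C)$; since $\otimes$ is commutative, the left- and right-distributive laws coincide, so one suffices. I would prove the two inclusions between $\mathrm{conv}(A)+\mathrm{conv}(B\cup C)$ and $\mathrm{conv}\big((\mathrm{conv}(A)+\mathrm{conv}(B))\cup(\mathrm{conv}(A)+\mathrm{conv}(C))\big)$ directly. For $\subseteq$: a typical point of the left side is $a+\sum_i\lambda_i p_i$ with $a\in\mathrm{conv}(A)$, $p_i\in B\cup C$, $\lambda_i\ge 0$, $\sum_i\lambda_i=1$, and rewriting it as $\sum_i\lambda_i(a+p_i)$ exhibits it as a convex combination of points of $(\mathrm{conv}(A)+\mathrm{conv}(B))\cup(\mathrm{conv}(A)+\mathrm{conv}(C))$. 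For $\supseteq$: a typical point of the right side is $\sum_i\lambda_i(a_i+p_i)$ with $a_i\in\mathrm{conv}(A)$ and $p_i\in B\cup C$, and rewriting it as $\big(\sum_i\lambda_i a_i\big)+\big(\sum_i\lambda_i p_i\big)$ places it in $\mathrm{conv}(A)+\mathrm{conv}(B\cup C)$ by convexity of $\mathrm{conv}(A)$ and of $\mathrm{conv}(B\cup C)$.

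The step I expect to be delicate is not any single computation but keeping the representation bookkeeping honest: the carrier $\mathbb{K}$ consists of vertex sets, not convex sets, so every operation secretly ends with "extract the extreme points," and one must verify that this extraction commutes with the identification $A\leftrightarrow\mathrm{conv}(A)$ used above — which it does, because $\mathrm{conv}$ and the vertex map are mutually inverse on polygons, but it should be stated cleanly up front rather than re-checked axiom by axiom. Once that is pinned down, everything else is a direct appeal to closure properties of $\mathrm{conv}$ and $+$, and the only place where convexity is used nontrivially (rather than merely as a closure property) is the $\supseteq$ direction of distributivity.
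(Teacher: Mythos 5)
Your proposal is correct, and in the one place that matters it is more complete than the paper's own argument. Both proofs rest on the same two geometric facts—$\textrm{conv}(\textrm{conv}\,X \cup \textrm{conv}\,Y)=\textrm{conv}(X\cup Y)$ for $\oplus$, and $\textrm{conv}(X+Y)=\textrm{conv}\,X+\textrm{conv}\,Y$ for $\otimes$ (the latter is exactly the Krein--\v{S}mulian result the paper cites)—so commutativity, associativity, the identities, annihilation by $\overline{0}$, and idempotence come out essentially the same way; your explicit observation that the carrier consists of vertex sets and that vertex extraction and $\textrm{conv}$ are mutually inverse on polygons is bookkeeping the paper leaves implicit (it also replaces the paper's admittedly informal argument for associativity of $\oplus$ with a clean appeal to fact (i)). Where you genuinely diverge is distributivity: the paper asserts that once commutativity and associativity of both operations are shown, ``distributivity follows,'' which is not a valid inference in general, and no direct argument is given there. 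Your double-inclusion proof that $A\otimes(B\oplus C)=(A\otimes B)\oplus(A\otimes C)$, with commutativity of $\otimes$ yielding the other distributive law, supplies precisely this missing step, and it is the only place convexity is used beyond closure properties. One small slip in your $\supseteq$ direction: a point of $\textrm{conv}\,A+\textrm{conv}\,B$ has the form $a_i+q_i$ with $q_i\in\textrm{conv}\,B$, not necessarily $q_i\in B$; this is harmless, since $\textrm{conv}\,B\cup\textrm{conv}\,C\subseteq\textrm{conv}(B\cup C)$ and convexity of $\textrm{conv}(B\cup C)$ still places $\sum_i\lambda_i q_i$ where you need it, but state it that way.
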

\begin{proof}
To show that this is a semiring, we need only to demonstrate that commutativity and associativity hold for both addition and multiplication, from which distributivity follows.  Commutativity ($A \cdot B = B \cdot A$) follows straightforwardly from the definitions of addition and multiplication, as do the identities.  Proving associativity is a bit more subtle on account of the conv operator. For multiplication, we rely on results of \newcite{krein:1940}, who show that
$$\textrm{conv }[A +_{\textrm{Mink.}} B] = \textrm{conv }[\textrm{conv } A +_{\textrm{Mink.}} \textrm{conv } B] \ \ .$$
For addition, we make an informal argument that a context hull circumscribes a set of points, and convexification removes the interior ones. Thus, addition continually expands the circumscribed sets, regardless of what their interiors were, so order does not matter. Finally, addition is idempotent since $\textrm{conv }[A \cup A] = A$.
\end{proof}

\section{Complexity}
\label{sec:complexity}
Shared structures such as finite-state automata and context-free grammars encode an exponential number of different derivations in polynomial space. Since the values of the convex hull semiring are themselves sets, it is important to understand how their sizes grow. Fortunately, we can state the following tight bounds, which guarantee that growth will be worst case linear in the size of the input grammar:
\begin{theorem} $|A \oplus B| \le |A| + |B|$.\end{theorem}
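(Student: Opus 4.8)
The plan is to reduce the claim to the elementary fact from convex geometry that the extreme points of the convex hull of a finite point set are always a \emph{subset} of that set — taking convex hulls can only delete points (those that become interior or fall on an edge), never create new vertices.

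First I would fix notation: by definition an element of $\mathbb{K}$ is identified with the (finite) set of extreme points of the hull it represents, so $|A|$ and $|B|$ count those extreme points, and $A \oplus B = \textrm{conv}[A \cup B]$ is likewise represented by its own extreme-point set, call it $E$. Next comes the one substantive step: every point $p \in E$ is by construction an extreme point of $\textrm{conv}[A \cup B]$, and an extreme point of $\textrm{conv}[S]$ for a finite set $S$ must lie in $S$ — otherwise $p$ would be expressible as a proper convex combination of points of $S$, contradicting extremality. (This is the finite-dimensional Minkowski/Krein--Milman statement, but in the polygonal case it also follows directly.) Hence $E \subseteq A \cup B$, and therefore
\[
|A \oplus B| = |E| \;\le\; |A \cup B| \;\le\; |A| + |B|,
\]
where the final inequality is pure set theory, with equality exactly when $A$ and $B$ are disjoint.

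To see the bound is tight as claimed, I would exhibit a witnessing pair: choose disjoint finite sets $A$ and $B$, each in strictly convex position, arranged so that $A \cup B$ is also in strictly convex position (no point of either set is obscured by, or collinear with, points of the other). Then no vertex is lost when forming the hull, so $E = A \cup B$ and $|A \oplus B| = |A| + |B|$. There is no serious obstacle here; the only point requiring care is the justification that $\oplus$ cannot manufacture a vertex outside $A \cup B$, which is precisely the inherited-extremality argument above.
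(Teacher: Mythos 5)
Your argument is correct and is exactly the reasoning the paper leaves implicit: it states this bound without proof precisely because the extreme points of $\textrm{conv}[A \cup B]$ must be drawn from $A \cup B$, giving $|A \oplus B| \le |A \cup B| \le |A| + |B|$ (the cited Theorem~13.5 of de~Berg et al.\ is only needed for the Minkowski-sum bound on $\otimes$). Your inherited-extremality justification and tightness example are sound, so there is nothing to add.
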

\begin{theorem} $|A \otimes B| \le |A| + |B|$.\end{theorem}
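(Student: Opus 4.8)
The plan is to read $|A|$ and $|B|$ as the numbers of vertices of the convex polygons $\mathrm{conv}(A)$ and $\mathrm{conv}(B)$, to reduce the claim to the classical fact that the Minkowski sum of two convex polygons with $m$ and $n$ vertices has at most $m+n$ vertices, and then to prove that fact through the support functions (equivalently, the normal fans) of the two summands.

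First I would carry out the reduction. By the definition of $\otimes$, the element $A\otimes B$ is represented by the extreme points of $\mathrm{conv}\{a+b : a\in A,\ b\in B\}$, and by the Krein identity invoked above this set equals $\mathrm{conv}(A)\boxplus\mathrm{conv}(B)$, the Minkowski sum of the two polygons; since a Minkowski sum of convex sets is already convex, no further hull is needed. Writing $P:=\mathrm{conv}(A)$ and $Q:=\mathrm{conv}(B)$, which have $|A|$ and $|B|$ vertices respectively, it therefore suffices to bound the number of vertices of $P\boxplus Q$ by $|A|+|B|$. I would first clear away the degenerate cases — $P$ or $Q$ empty (the sum is empty), a single point (the sum is a translate of the other polygon), or a segment (the sum has at most two more vertices than the other polygon) — so that in the remaining case both $P$ and $Q$ are full-dimensional polygons with at least three vertices.

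For that main case the tool is the behaviour of exposed faces under Minkowski sums. For a direction $u\in S^1$ let $F_X(u)=\arg\max_{x\in X}\langle u,x\rangle$ be the face of a convex body $X$ that $u$ exposes. I would use the standard identity $h_{P\boxplus Q}(u)=h_P(u)+h_Q(u)$ for support functions and its consequence $F_{P\boxplus Q}(u)=F_P(u)\boxplus F_Q(u)$. From this: if $u$ is normal to no edge of $P$ or of $Q$, then $F_P(u)$ and $F_Q(u)$ are single vertices and so is $F_{P\boxplus Q}(u)$; if $u$ is an edge-normal of exactly one of them, the sum face is a translate of that edge; and if $u$ is an edge-normal of both, the sum face is the longer, parallel edge obtained by adding the two parallel edges. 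Hence the edges of $P\boxplus Q$ are in bijection with the \emph{distinct} directions among the $m$ edge-normals of $P$ and the $n$ edge-normals of $Q$, of which there are at most $m+n$. A (full-dimensional) convex polygon has as many vertices as edges, so $P\boxplus Q$ has at most $m+n=|A|+|B|$ vertices. This is exactly the ``merge two cyclically sorted lists of edge directions'' picture that underlies the linear-time algorithm for computing Minkowski sums, and it also shows that equality holds precisely when $P$ and $Q$ share no edge direction.

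The step I expect to be the main obstacle is not any single deep fact but the careful bookkeeping: deriving the face identity $F_{P\boxplus Q}(u)=F_P(u)\boxplus F_Q(u)$ cleanly from additivity of support functions, disposing of the degenerate low-dimensional summands separately, and — most easily overlooked — noting that a direction shared by $P$ and $Q$ merges two edges into one, so the true vertex count of $P\boxplus Q$ is the number of \emph{distinct} edge directions, which is what makes the bound a $\le$ rather than an equality.
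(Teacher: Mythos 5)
Your proof is correct and follows essentially the same route as the paper, which does not argue the bound itself but defers it to Theorem~13.5 of de~Berg et al.\ (the ``fourms'' citation): your reduction via the Krein identity to a Minkowski sum of the two hulls, followed by the edge-normal (merged sorted directions) counting argument, is precisely the content and proof of that cited theorem. You simply make explicit the details the paper delegates to the reference, including the degenerate cases and the observation that shared edge directions merge, which is why the bound is $\le$ rather than equality.
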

\noindent The latter fact is particularly surprising, since multiplication appears to have a bound of $|A|\times |B|$. The linear (rather than multiplicative) complexity bound for Minkowski addition is the result of Theorem~13.5 in \newcite{fourms}. From these inequalities, it follows straightforwardly that the number of points in a derivation forest's total convex hull is upper bounded by $|E|$.\footnote{This result is also proved for the lattice case by \newcite{macherey:2008}.}

\section*{Acknowledgements}
We thank David Mount for suggesting the point-line duality and pointing us to the relevant literature in computational geometry and Adam Lopez for the TikZ \textsc{mert} figures.

\bibliographystyle{acl}
\bibliography{biblio}
\end{document}